\spnewtheorem{thm}{Theorem}{\bfseries}{\itshape}
\spnewtheorem{cor}{Corollary}{\bfseries}{\itshape}
\spnewtheorem{lem}{Lemma}{\bfseries}{\itshape}
\spnewtheorem{defn}{Definition}{\bfseries}{\itshape}
\begin{document}
\title{Deep Learning for Inverse Problems: Bounds and Regularizers}
%
%\titlerunning{Abbreviated paper title}
% If the paper title is too long for the running head, you can set
% an abbreviated paper title here
%
\author{Jaweria Amjad\inst{1} \and
Zhaoyan Lyu\inst{1} \and
Miguel Rodrigues\inst{1}}
\authorrunning{J. Amjad et al.}
% First names are abbreviated in the running head.
% If there are more than two authors, 'et al.' is used.
%
\institute{Department of Electronics and Electrical Engineering, University College London\\
%\url{http://www.springer.com/gp/computer-science/lncs} \and
\email{\{jaweria.amjad.16,
z.lyu.17,m.rodrigues\}@ucl.ac.uk}}
\maketitle              % typeset the header of the contribution
\begin{abstract}
%{\color{red} Sentence with a brief intro to inverse problems, alluding to some of the approaches typically used to solve such problems}
Inverse problems abound in a number of domains such as medical imaging, remote sensing, and many more, relying on the use of advanced signal \& image processing approaches -- such as sparsity-driven techniques -- to determine their solution.
This paper instead studies the use of deep learning approaches to approximate the solution of inverse problems. In particular, the paper provides a new generalization bound, depending on key quantity associated with a deep neural network -- its Jacobian matrix -- that also leads to a number of computationally efficient regularization strategies applicable to inverse problems %{\color{red} Can we really claim that the regularization approaches are computationally efficient? Zhaoyan -- please comment}.
The paper also tests the proposed regularization strategies in a number of inverse problems including image super-resolution ones. Our numerical results conducted on various datasets show that both fully connected and convolutional neural networks regularized using the regularization or proxy regularization strategies originating from our theory exhibit much better performance than deep networks regularized with standard approaches such as  weight-decay.% {\color{red} we probably also have to specify other regularization strategies used beyond weight-decay}.

\keywords{Generalization Error  \and Inverse Problems \and Lipschitz Constant.}
\end{abstract}

\section{Introduction}
\label{sec: Intro}

The machine learning community has witnessed a paradigm shift over the past few years, driven in large by the massive progress in the field of deep learning. Deep Neural Networks (DNN) have enjoyed great success in a myriad of applications such as computer vision, natural language processing, speech recognition, speech translation, and many more \cite{goodfellow2014generative},
%This has fueled massive efforts in this domain. In particular, tasks that seemed impossible a few years ago are now fairly tractable given enough data is available to tune a deep neural network \cite{silver2017mastering}.
In particular, tasks that seemed impossible a few years ago are now routinely addressed using DNNs \cite{silver2017mastering}. However, to this day, %scientists do not really understand as to why DNNs \emph{generalize} under highly over parameterized setting.
it is still unclear why deep neural networks perform exceedingly well in various tasks.

The efforts to characterize the %learning behaviour
generalization ability of DNNs -- using tools like VC dimensions and Rademacher compexity -- were further stoked up after \cite{zhang2016understanding} empirically demonstrated the massive capacity of deep learning architectures to fit random labels. This has in turn led to a series of attempts by machine learning theoreticians to explain the generalization properties of DNNs classifiers in terms bounds that rely upon classification margins, parameter norms, sharpness of minima and several other factors \cite{sokolic2017robust,neyshabur2017implicit,neyshabur2018towards,arora2018stronger,jakubovitz2018generalization}

However, despite significant progress made in recent years, the outstanding performance of deep neural networks on regression tasks is less understood than in classification ones. In particular, one important class of regression-oriented tasks where deep learning has been shown to offer outstanding performance improvements in relation to the state-of-the-art -- notably, sparsity-driven techniques -- are inverse problems \cite{mccann2017review}: these typically involve inferring a data vector $\textbf{x}$ from a noisy observation vector $\textbf{y} = \mathcal{F} \left(\textbf{x}\right) + \textbf{n}$ where $\mathcal{F} (\cdot)$ is some linear operator and $\textbf{n}$ is observation noise. It is therefore important to comprehend the factors that control the generalization ability of DNNs in this class of tasks with a view to illuminate further the mechanics undernearth deep learning.
This paper studies the generalization ability of deep neural networks in regression-oriented tasks, with a special emphasis on inverse problems.
Our contributions are as follows:
\begin{itemize}
	\item \emph{{\bf Regression Task-Oriented Generalization Bound:} We provide a bound to the generalization error of a deep neural network based regressor, showing that it depends on quantities such as network Jacobian, sample space complexity, and naturally the sample size. We also specialize such bound applicable to general regression tasks to the important class of inverse problems, showing that it also depends on the Lipchitz constant of the mapping $\mathcal{F}$ and the noise power.}
	\item \emph{{\bf Regularization Strategies for Regression Tasks:} We then highlight a number of  regularization strategies, stemming directly from our generalization bounds. In particular, we also review existing computationally efficient regularization strategies -- which can be seen to be rooted in our bounds -- and likewise we also propose a new orthogonalization based regularization strategy leading to some performance improvements.}
		\item \emph{{\bf Experimental Results:} Finally, we offer a detailed empirical study that showcases that our theoretically-rooted regularization strategies can lead to substantial improvements over traditional ones in inverse problems such as image reconstruction and image super-resolution.} 
\end{itemize}		

The remainder of the paper is organized as follows: In Section \ref{sec: Related Work} 
we overview major works related to our work. We then introduce our problem set-up in Section \ref{sec: Problem}. In Section \ref{sec: GE_Analysis} we study the generalization ability of DNNs on general regression tasks, including inverse problems, and in Section \ref{sec: Reg_Strat} we review and propose various regularization strategies stemming directly from our analysis. In Section \ref{sec: experiments}, we provide simulation results on different neural networks architectures incorporating various regularization strategies. Finally, concluding remarks are drawn in Section \ref{sec: conclusion}. The main proofs are relegated to the Appendix.

A word about notation: We use lower case boldface characters to denote vectors, upper case boldface characters to represent matrices and sets are represented by calligraphic font. For example $\mathbf{x}$ is a vector, $\mathbf{X}$ is a matrix and $\mathcal{X}$ is a set. For a matrix, $\|.\|_{2,2}$ signifies the spectral norm and $\|.\|_{F}$ is used to denote the Frobenious norm. For vectors,  $\|.\|_2$ is used to mean the $\ell_2$ norm.

\section{Related Work}\label{sec: Related Work}

Our work connects to various other works in the literature. In particular, a number of papers have in recent years offered characterizations of the generalization ability of deep neural networks that have in turn inspired new regularization strategies \cite{jakubovitz2018generalization}. For example, both \cite{neyshabur2017implicit} and \cite{bartlett2017spectrally} independently provided $GE$ bounds for deep neural networks, expressed in terms of different norms associated with the collection of network parameters  -- such as group norm, max norm and spectral norm -- thus inspiring new regularizers aiming explicitly at constrain the network complexity by limiting the value of such norms.
%that characterized the network $GE$ in terms of different parameter norms and thus inspired regularizers that constrained network complexity by limiting these norms.
\cite{arora2018stronger} provided a classification framework to characterize the generalization properties of neural networks, leading to linear-algebraic algorithms to effectively limit the number of parameters in individual layers. In \cite{cisse2017parseval}, the authors proposed an upper bound to the $GE$ of a DNN based classifier in the presence of the adversarial perturbations. The authors have also proposed to optimize the network in a manner that forces the weight matrices to remain on the Stiefel manifold \cite{absil2009optimization}, by forcing the gram matrix of the weight matrices to be closer to an identity matrix. Finally, in \cite{sokolic2017robust}, the authors derive a $GE$ bound for large margin DNN classifiers that leads to a new Jacobian regularizer -- involving punishing the Frobenius norm of the network Jacobian -- that also further boosts a deep neural network performance. Our work departs from these works because the focus is on regression in \textit{lieu} of discriminative tasks, despite the fact that some of the regularizers originating from our analysis also connect to some regularizers already proposed in the literature.%

There are various other papers that have in turn suggested a number of new regularization strategies based on empirical considerations, showcasing that such regularization approaches can lead to better performance than conventional ones. In particular \cite{gouk2018regularisation} propose to explicitly enforce an upper bound on the Lipschitz constant of neural networks -- via the operator norm of the weight matrices -- in order to improve its performance. The work puts emphasis on $\ell_1$ and $\ell_\infty$ operator norms, but also showing that the method can be combined with other regularizers such as dropout in order to yield superior cumulative performance.

Various other works \cite{saxe2013exact,mishkin2015all,bansal2018can,cisse2017parseval} have advocated limiting the spectral norm of the weight matrices. Motivated by the norm preservation offered by orthogonal weight matrices, \cite{saxe2013exact} propose orthogonal weight initializations to accelerate the training speed of the neural networks. However, initialization alone does not guarantee orthogonality throughout the training process, hence the orthogonality properties of the final tuned weight matrices may differ substantially from the original ones. Taking this approach a step further, \cite{jia2017improving} propose to initialize the weight matrices by the technique proposed by \cite{saxe2013exact} but simultaneously manually clip the singular values of weight matrices in a narrow window around 1 during the training process in order to maintain orthogonality properties. However, in addition to being computationally expensive owing to the cost associated with the calculation of singular value decompositions (SVD), this method seems counter intuitive since the \emph{new} matrix with the clipped singular values may not be close to the original updated weight matrix, possibly resulting in performance deterioration.  In another work, \cite{brock2016neural} empirically show that regularizing weight matrices to make them orthogonal results in improved performance for generative networks. Motivated by the benefits offered by orthogonal regularization, in \cite{bansal2018can} the authors present a regularization technique for convolution neural networks that forces the weight matrices of a convolutional layer to have a small restricted isometry constant \cite{candes2008restricted}. Other works such as \cite{farnia2018generalizable} and \cite{sedghi2018singular} present efficient algorithms to calculate the spectral norm of the linear transform associated with the convolutional layers. Weight orthogonalization approaches have also received ample attention in recurrent neural networks as well \cite{arjovsky2016unitary,vorontsov2017orthogonality}.

Our work also departs from these works because -- in addition to focus on regression rather than classification tasks -- it also provides a theoretical justification for some of these regularization tasks rooted on generalization error bounds.
\section{Problem Setup}\label{sec: Problem}
We consider the problem of estimating a vector ${\bf x} \in \mathcal{X}\subseteq \mathbb{R}^{N_x}$ from another vector ${\bf y} \in \mathcal{Y}\subseteq \mathbb{R}^{N_y}$ that are related as follows:
\begin{equation}
\label{eq: Inverse_model}
    \mathbf{y} = \mathcal{F}(\mathbf{x})+\mathbf{n}
\end{equation}
where $\mathcal{F}$ is a $L$-Lipschitz continuous operator, i.e. 
	\begin{equation}
	\|\mathcal{F}(\mathbf{x}_1)-\mathcal{F}(\mathbf{x}_2)\|_2\leq L\|\mathbf{x}_1-\mathbf{x}_2\|_2, \forall \mathbf{x}_1,\mathbf{x}_2 \in \mathcal{X}	 \label{mapping}
	\end{equation}
and $\mathbf{n}$ is a bounded perturbation representing noise (i.e. $\|\mathbf{n}\|_2\le \eta$).

This model encapsulates a number of problems arising in practice, including inverse problems such as compressed sensing, image denoising, image deblurring, image super-resolution, and many more \cite{bertero1998introduction}. A number of approaches have been proposed to solve this class of problems including state-of-the-art iteratively reweighted least squares and iterative soft-thresholding methods \cite{chartrand2008iteratively}.

%, where the space $\mathcal{Y}$ consists of a mapping induced by a $L$-Lipschitz continuous linear operator $\mathcal{F}$ on the input space plus a bounded perturbation,i.e.
%	\begin{equation}
%	\mathcal{Y}=\left\{\mathcal{F}(\mathbf{x})+\mathbf{n},\forall \mathbf{x}\in\mathcal{X},\|\mathbf{n}\|_2\le \eta\right\}	 \label{eq" Y_space}
%	\end{equation}
%	where
%	\begin{equation}
%	\|\mathcal{F}(\mathbf{x}_1)-\mathcal{F}(\mathbf{x}_2)\|_2\leq L\|\mathbf{x}_1-\mathbf{x}_2\|_2	 \label{mapping}
%	\end{equation}
%	for any $\mathbf{x}_1,\mathbf{x}_2 \in \mathcal{X}$. We assume for theoretical reasons that both $\mathcal{X}$ and $\mathcal{Y}$ are compact with respect to the $\ell_2$-metric and that the sample space $\mathcal{D} = \mathcal{X} \times \mathcal{Y}$ is compact with respect to the sup-metric $\rho$\footnote{The sup metric $\rho$ on a product space $\mathcal{D}=\mathcal{X}\times\mathcal{Y}$ is given by $\max(\|\mathbf{x}_1-\mathbf{x}_2\|_2,\|\mathbf{y}_1-\mathbf{y}_2\|_2)$, $\forall \mathbf{x}_1,\mathbf{x}_2\in\mathcal{X},  \mathbf{y}_1,\mathbf{y}_2\in\mathcal{Y}$.}.

We consider however a supervised learning approach to solve this problem, involving using a regressor $\mathbf{\Xi}_\mathcal{S}(\cdot): \mathcal{Y} \rightarrow \mathcal{X}$ that has been trained on a set of $m$ examples $\mathcal{S}=\{\mathbf{s}_i=(\mathbf{x}_i,\mathbf{y}_i)\}_{i\leq m}$, drawn independently and identically distributed (IID) from the sample space $\mathcal{D} = \mathcal{X} \times \mathcal{Y}$ according to an unknown distribution $\mu$ underlying the data. 

%We {\color{red} however} solve this problem in a supervised learning fashion where we train a deep learning regressor $\mathbf{\Xi}_\mathcal{S}(\cdot): \mathcal{Y} \rightarrow \mathcal{X}$ on a set of $m$ training examples $\mathcal{S}=\{\mathbf{s}_i=(\mathbf{x}_i,\mathbf{y}_i)\}_{i\leq m}$, drawn IID from the sample space $\mathcal{D} = \mathcal{X} \times \mathcal{Y}$ according to an unknown distribution $\mu$. 

We are interested in characterizing the quality of such a learnt regressor, by assessing how well it will perform on a previously unseen sample ${\bf s}=({\bf x},{\bf y}) \in \mathcal{D}$. This can be done via the generalization error ($GE$) associated with the regressor given by:

\begin{equation}\label{GE}
	GE(\mathbf{\Xi}_\mathcal{S})=|l_{\text{exp}}(\mathbf{\Xi}_\mathcal{S})-l_{\text{emp}}(\mathbf{\Xi}_\mathcal{S})|
\end{equation}
corresponding to the difference between the expected and empirical losses given by:
\begin{equation*}
l_{\text{exp}}(\mathbf{\Xi}_\mathcal{S}) = \mathbb{E}[l(\mathbf{\Xi}_\mathcal{S} (\mathbf{y}),\mathbf{x})]
\end{equation*}
\begin{equation*}
l_{\text{emp}}(\mathbf{\Xi}_\mathcal{S}) = \frac{1}{m}\sum_i l(\mathbf{\Xi}_\mathcal{S} (\mathbf{y}_i),\mathbf{x}_i)
\end{equation*}
where $l(\cdot,\cdot)$ is a loss function that is taken to be the $\ell_2$-loss.
%where the loss function $l(\cdot,\cdot)$ is taken to be the $\ell_2$-loss.

In view of the fact that the class of feed forward DNNs has been shown to deliver outstanding performance in regression problems recently \cite{lucasusing}, we are exclusively interested in characterizing their performance. The output of a $d$-layer feed forward neural network $ \hat{\mathbf{x}} = \mathbf{\Xi}_\mathcal{S}(\mathbf{y})$ given input $\mathbf{y}$ can be expressed as follows: \footnote{A convolutional layer can be represented in a similar setting, where the transformation $\mathbf{W}$ can be represented as a function of doubly-block circulant matrices that in turn are composed of the 4-D filter $\tilde{\mathbf{W}}$ \cite{sedghi2018singular}. }
\begin{equation}
    \hat{\mathbf{x}} = \mathbf{\Xi}_\mathcal{S}(\mathbf{y})=\mathbf{W}_d^T\phi(\ldots\phi(\mathbf{W}_1^T\mathbf{y}+\mathbf{b}_1)\ldots)+\mathbf{b}_d
\end{equation}
where $\mathbf{W}_i$ and $\mathbf{b}_i$ denote the weight matrix and bias vector for the $i$-th layer respectively for all $i\in \{1,\ldots,d\}$ and $\phi(.)$ represents the element-wise activation function such as rectified linear units or sigmoid.

%We concentrate exclusively on $d$-layer feed forward DNNs where we define the output of the DNN as $\mathbf{\Xi}_\mathcal{S}(\mathbf{y})=\mathbf{W}_d^T\phi(\ldots\phi(\mathbf{W_1}_1^T\mathbf{y}+\mathbf{b}_1)\ldots)+\mathbf{b}_d$\footnote{A convolutional layer can be represented in a similar setting, where the transformation $\mathbf{W}$ can be represented as a function of doubly-block circulant matrices that in turn are composed of the 4-D filter $\mathbf{w}$. }. Here $\mathbf{W}_i$ and $\mathbf{b}_i$ denote the weight and bias for the $i$-th layer respectively for all $i\in \{1,\ldots,d\}$ and $\phi(.)$ represents the element-wise activation function.

%We train all our models using Stochastic Gradient Descent (SGD) which iteratively minimizes $l_{emp}$ by taking small steps  towards the direction of its steepest descent of the loss manifold {\color{red} I do not think we need this sentence here}.
%We, next introduce the Jacobian matrix $\mathbf{J}$ of the network. Let the output of the network for an input $\mathbf{x}$ be represented by $\hat{\mathbf{x}}$. Then the Jacobian is given by:

A critical quantity emerging in our analysis is the Jacobian matrix $\mathbf{J}$ of the network given by:
\begin{equation*}
    \mathbf{J} = \frac{d\hat{\mathbf{x}}}{d\mathbf{y}}=\left[\begin{matrix}\frac{\partial\hat{x}_1}{\partial y_1} & \ldots &\frac{\partial\hat{x}_1}{\partial y_{N_y}}\\
    \vdots & \ddots & \vdots\\ \frac{\partial\hat{x}_{N_x}}{\partial y_1} & \ldots &\frac{\partial\hat{x}_{N_x}}{\partial y_{N_y}}\end{matrix}\right]\in \mathbb{R}^{N_x\times N_y}
\end{equation*}

Note that, owing to the chain rule of derivatives, the network Jacobian can further be decomposed into a product of layer-wise Jacobian matrices i.e., $\mathbf{J} = \prod_{i=1}^{d} \mathbf{J}_i$.

The importance of Jacobian matrices -- widely recognized in the context of classification problems -- relates to their role in the stability of the deep neural network, in view of the fact that the magnitude of the singular values of the Jacobian matrix determines the magnitude of signal variations during the forward and backward passes in the network. Therefore, the ability to control the value of the operator norm of the Jacobian matrix also has implications on issues such as vanishing or exploding gradients.

We next showcase -- via an analysis of the generalization gap of DNN regressors -- that the importance of the Jacobian also carries over from classification to regression problems considered here. 

%Jacobian matrices play a significant role in network stability since the singular values of the Jacobian of each layer determines the factor by which the norm of a signal varies in backward or forward pass. This implies that problem of vanishing or exploding gradients can very well be contained if the operator norm of Jacobians are contained to a small value. This discovery together with the fact that the layer wise Jacobians can be estimated by the weight matrices of each layer, has encouraged training techniques that minimize the spectral norm of weights  \cite{yoshida2017spectral,saxe2013exact,jia2017improving,sedghi2018singular}. However, as discussed earlier, none of these papers consider a regression problem. In view of the fact that the generalization ability of deep neural network regressors is poorly understood, our goal in the sequel is to provide general generalization bounds for DNN based regression -- applicable to a wide range of settings -- as well as specialized generalization bounds for DNN based regression applicable to inverse problems. 
	
	\section{Generalization Bounds}\label{sec: GE_Analysis}
Our analysis builds upon the \emph{algorithmic robustness} framework in \cite{xu2012robustness}.
	\begin{defn}\emph{(Algorithmic Robustness)} 
	\label{def:robust}
		Let $\mathcal{S}$ and $\mathcal{D}$ denote the training set and sample space. A learning algorithm is said to be $(K, \epsilon(\mathcal{S}))$-robust if the sample space $\mathcal{D}$ can be partitioned into $K$ disjoint sets $\mathcal{K}_k$, $k = 1,\ldots ,K$,	such that for all $(\mathbf{x}_i,\mathbf{y}_i)\in \mathcal{S}$ and all $(\mathbf{x},\mathbf{y})\in\mathcal{D}$
		\begin{align}
		(\mathbf{x}_i,\mathbf{y}_i), (\mathbf{x},\mathbf{y})\in\mathcal{K}_k
		\implies \\ \nonumber
		\left|l(\mathbf{\Xi}_\mathcal{S} (\mathbf{y}_i),\mathbf{x}_i)-l(\mathbf{\Xi}_\mathcal{S} (\mathbf{y}),\mathbf{x})\right|\leq \epsilon(\mathcal{S}) \label{eq:epsilon}
		\end{align}
	\end{defn}

This notion has already been used to analyse the performance of deep neural networks in \cite{sokolic2017robust,cisse2017parseval}. However, such analyses applicable to classification tasks to not carry over immediately to regression ones.

We begin by re-adapting a relevant result put forth in \cite{sokolic2017robust} that establishes that a deep neural network is Lipschitz continuous with a Lipschitz constant dictated by the network Jacobian. 
%Earlier works \cite{sokolic2017robust}, have exploited this idea to prove generalization bounds for margin based classifiers. In \cite{cisse2017parseval}, algorithmic robustness has been used to provide an upper bound on $GE$ in the presence of adversarial noise. Though insightful for classification tasks, the bounds from these papers cannot be directly used to delineate the performance of deep neural networks in solving inverse problems. However, an important result from \cite{sokolic2017robust} that establishes the Lipschitz continuity condition for deep learning architectures is presented next which serves the basis for our later analysis.
	\begin{lem}
	 \label{Thm : DNN_Lipschitz Con}
		(Adapted from Corollary 2 in \cite{sokolic2017robust}) Consider a $d$-layer DNN based regressor $\mathbf{\Xi}_{\mathcal{S}} (\cdot):\mathcal{Y}\rightarrow\mathcal{X}$. Then, for any $\mathbf{y}_1,\mathbf{y}_2\in\mathcal{Y}$, it follows that
		\begin{equation*}
		\|\mathbf{\Xi}_\mathcal{S}(\mathbf{y}_1)-\mathbf{\Xi}_\mathcal{S}(\mathbf{y}_2)\|_2\leq\prod_{i=1}^{d}\|\mathbf{J}_i\|_{2,2}\|\mathbf{y}_1-\mathbf{y}_2\|_2
		\end{equation*}
		where $\|.\|_{2,2}$ represents the spectral norm of a matrix. 
	\end{lem}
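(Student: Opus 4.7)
The plan is to decompose the regressor as a composition of per-layer maps and chain together per-layer Lipschitz bounds. Writing $\mathbf{\Xi}_{\mathcal{S}} = f_d \circ f_{d-1} \circ \cdots \circ f_1$ with $f_i(\mathbf{z}) = \phi(\mathbf{W}_i^T \mathbf{z} + \mathbf{b}_i)$ for $i<d$ and $f_d(\mathbf{z}) = \mathbf{W}_d^T \mathbf{z} + \mathbf{b}_d$, and denoting by $\mathbf{J}_i$ the Jacobian of $f_i$, the chain rule gives $\mathbf{J} = \mathbf{J}_d \mathbf{J}_{d-1}\cdots \mathbf{J}_1$ wherever every layer is differentiable. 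Submultiplicativity of the spectral norm then already produces the product inequality $\|\mathbf{J}\|_{2,2} \leq \prod_{i=1}^{d}\|\mathbf{J}_i\|_{2,2}$ at the level of derivatives; the real work is lifting this pointwise estimate to a bound on the finite difference $\|\mathbf{\Xi}_{\mathcal{S}}(\mathbf{y}_1) - \mathbf{\Xi}_{\mathcal{S}}(\mathbf{y}_2)\|_2$.

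For the lift I would apply the mean value inequality layer by layer. Fix two inputs $\mathbf{z}_1, \mathbf{z}_2$ to layer $f_i$, set $\mathbf{z}(t) = \mathbf{z}_1 + t(\mathbf{z}_2 - \mathbf{z}_1)$ for $t \in [0,1]$, and use the fundamental theorem of calculus in the form
\begin{equation*}
f_i(\mathbf{z}_2) - f_i(\mathbf{z}_1) = \left(\int_0^1 \mathbf{J}_i(\mathbf{z}(t))\,dt\right)(\mathbf{z}_2 - \mathbf{z}_1),
\end{equation*}
so that the triangle inequality for Bochner integrals and the definition of spectral norm yield $\|f_i(\mathbf{z}_2) - f_i(\mathbf{z}_1)\|_2 \leq \sup_{t\in[0,1]}\|\mathbf{J}_i(\mathbf{z}(t))\|_{2,2}\,\|\mathbf{z}_2 - \mathbf{z}_1\|_2$. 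Iterating from $i=1$ up to $i=d$, with the outputs of layer $i-1$ playing the role of $\mathbf{z}_1,\mathbf{z}_2$ at layer $i$, telescopes into $\|\mathbf{\Xi}_{\mathcal{S}}(\mathbf{y}_1) - \mathbf{\Xi}_{\mathcal{S}}(\mathbf{y}_2)\|_2 \leq \prod_{i=1}^{d}\|\mathbf{J}_i\|_{2,2}\,\|\mathbf{y}_1 - \mathbf{y}_2\|_2$, where $\|\mathbf{J}_i\|_{2,2}$ is implicitly understood as the uniform upper bound over the line segment traced inside layer $i$ (consistent with a global Lipschitz constant).

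The main obstacle will be that standard activations such as ReLU are only globally Lipschitz rather than $C^1$, so the layer Jacobians $\mathbf{J}_i$ exist only almost everywhere and the mean value identity above does not apply verbatim. I would handle this by invoking Rademacher's theorem: each $f_i$ is globally Lipschitz in $\mathbf{z}$, hence differentiable a.e., and the input space partitions into finitely many polytopes on which every activation acts affinely. The straight line $\mathbf{z}(t)$ meets this partition in a finite set of breakpoints, the differentiable mean value argument applies on each sub-segment, and summing with the triangle inequality reproduces the per-layer estimate with no change to the constant. An alternative route is to mollify $\phi$ by a parameter $\varepsilon$, apply the smooth argument to the mollified network, and pass $\varepsilon \to 0$ using continuity of the spectral norm; either route combined with the chain-rule factorization of $\mathbf{J}$ delivers the lemma.
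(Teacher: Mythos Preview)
The paper does not supply its own proof of this lemma; it simply cites it as adapted from Corollary~2 of \cite{sokolic2017robust}. Your argument is correct and is essentially the standard route taken in that reference: factor the network as a composition of per-layer maps, bound each layer's Lipschitz constant by the spectral norm of its Jacobian, and telescope. Your careful treatment of the non-differentiability of ReLU (via the piecewise-affine structure or mollification) is more explicit than what the paper records, but it is exactly the kind of justification that underlies the cited result.
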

%{\color{red} I do not see the point of this remark. If you want to make some remark, i think we should elaborate here how this theorem differs from the one in Sokolic, but the remark you do seems to be a bit our of place (unless i misunderstood something)} We would like to remark that Corollary 2 in \cite{sokolic2017robust} proved that the Lipschitz constant of the DNN should be bounded for the inputs in the convex hull of $\mathcal{Y}$ which is a hard constraint. However, it was shown in the same work that punishing norm of the network’s Jacobian for the inputs in the training set suffices to regularize the DNN. 

We can now show -- building upon Lemma \ref{Thm : DNN_Lipschitz Con} -- that a DNN regressor is robust.
%Recalling the robustness framework from \cite{xu2012robustness} and exploiting Theorem \ref{Thm : DNN_Lipschitz Con}, we can now show that a DNN regressor, optimized over the $l_2$ loss, is a robust learning algorithm.
	\begin{thm}(Robustness)\label{Thm: Robust_DNN}
		Consider that $\mathcal{X}$ and $\mathcal{Y}$ are compact spaces with respect to the $\ell_2$ metric. Consider also the sample space $\mathcal{D}=\mathcal{X}\times\mathcal{Y}$ equipped with a sup metric \footnote{For $\mathcal{X}$ and $\mathcal{Y}$ compact with $\ell_2$ norm, the sup product metric $\rho(\mathbf{s}_1,\mathbf{s}_2)=\max\{\|\mathbf{x}_1-\mathbf{x}_2\|_2,\|\mathbf{y}_1-\mathbf{y}_2\|_2\}$ for all $\mathbf{s}_1,\mathbf{s}_2\in\mathcal{D}$}. It follows that a $d$-layer DNN based regressor $\mathbf{\Xi}_{\mathcal{S}} (\cdot):\mathcal{Y}\rightarrow\mathcal{X}$ trained on the training set $\mathcal{S}$ is
		\begin{equation*}
		\left(\mathcal{N}\left(\frac{\psi}{2};\mathcal{D},\rho\right),\left(1+\prod_{i=1}^{d}\|\mathbf{J}_i\|_{2,2}\right)\psi\right)-robust
		\end{equation*}
		for any $\psi>0$, where $\mathcal{N}\left(\frac{\psi}{2};\mathcal{D},\rho\right) < \infty$ represents the covering number of the metric space $(\mathcal{D},\rho)$ using metric balls of radius $\psi/2$. 
	\end{thm}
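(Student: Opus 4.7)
The plan is to combine two ingredients: (i) the compactness of $\mathcal{D}$ to obtain a finite partition into cells of small $\rho$-diameter, and (ii) Lemma \ref{Thm : DNN_Lipschitz Con} to control how much the regressor output changes across such a cell. With these in hand, the desired inequality on the $\ell_2$-loss will fall out of the reverse triangle inequality.

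First, since $\mathcal{X}$ and $\mathcal{Y}$ are compact with respect to $\|\cdot\|_2$, their product $\mathcal{D}$ is compact with respect to the sup metric $\rho$, so the covering number $K = \mathcal{N}(\psi/2;\mathcal{D},\rho)$ is finite. Fixing an associated $(\psi/2)$-cover, I would turn it into a partition $\mathcal{K}_1,\ldots,\mathcal{K}_K$ of $\mathcal{D}$ by assigning each point to the index of the nearest cover centre (with an arbitrary tie-breaking rule). Each $\mathcal{K}_k$ then has $\rho$-diameter at most $\psi$, so that any $(\mathbf{x}_1,\mathbf{y}_1),(\mathbf{x}_2,\mathbf{y}_2)\in\mathcal{K}_k$ satisfy both $\|\mathbf{x}_1-\mathbf{x}_2\|_2\le\psi$ and $\|\mathbf{y}_1-\mathbf{y}_2\|_2\le\psi$ by definition of the sup metric.

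Next, taking the $\ell_2$-loss as $l(\mathbf{\Xi}_\mathcal{S}(\mathbf{y}),\mathbf{x}) = \|\mathbf{\Xi}_\mathcal{S}(\mathbf{y}) - \mathbf{x}\|_2$, I would apply the reverse triangle inequality to obtain
\begin{equation*}
\bigl|l(\mathbf{\Xi}_\mathcal{S}(\mathbf{y}_1),\mathbf{x}_1) - l(\mathbf{\Xi}_\mathcal{S}(\mathbf{y}_2),\mathbf{x}_2)\bigr| \le \|\mathbf{\Xi}_\mathcal{S}(\mathbf{y}_1) - \mathbf{\Xi}_\mathcal{S}(\mathbf{y}_2)\|_2 + \|\mathbf{x}_1 - \mathbf{x}_2\|_2.
\end{equation*}
The second term is bounded by $\psi$ from the preceding paragraph, while the first term is bounded by $\left(\prod_{i=1}^{d}\|\mathbf{J}_i\|_{2,2}\right)\|\mathbf{y}_1-\mathbf{y}_2\|_2 \le \left(\prod_{i=1}^{d}\|\mathbf{J}_i\|_{2,2}\right)\psi$ via Lemma \ref{Thm : DNN_Lipschitz Con}. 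Summing the two contributions gives exactly the claimed $\epsilon(\mathcal{S}) = \bigl(1+\prod_{i=1}^{d}\|\mathbf{J}_i\|_{2,2}\bigr)\psi$, matching Definition \ref{def:robust}.

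I do not foresee any real obstacle: turning a finite cover into a disjoint partition is standard, and every inequality above is elementary once Lemma \ref{Thm : DNN_Lipschitz Con} is in hand. The one subtlety worth flagging is the interpretation of the ``$\ell_2$-loss'': the argument above goes through cleanly for $l(\hat{\mathbf{x}},\mathbf{x}) = \|\hat{\mathbf{x}}-\mathbf{x}\|_2$, which is the reading consistent with the stated bound; had $l$ been the \emph{squared} $\ell_2$ loss, an additional step invoking compactness-based uniform boundedness of $\|\mathbf{\Xi}_\mathcal{S}(\mathbf{y})-\mathbf{x}\|_2$ (via a difference-of-squares factorization) would be required, producing an extra constant multiplicative factor but no structural change.
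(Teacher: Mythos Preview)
Your proposal is correct and follows essentially the same route as the paper: both arguments apply the reverse triangle inequality to the $\ell_2$-loss, split via the triangle (Minkowski) inequality into $\|\mathbf{x}_1-\mathbf{x}_2\|_2+\|\mathbf{\Xi}_\mathcal{S}(\mathbf{y}_1)-\mathbf{\Xi}_\mathcal{S}(\mathbf{y}_2)\|_2$, invoke Lemma~\ref{Thm : DNN_Lipschitz Con} on the second term, and then bound both pieces by $\psi$ using the sup metric on a $\psi/2$-cover. Your explicit cover-to-partition step and your remark on the squared-versus-unsquared loss are welcome clarifications but do not change the underlying strategy.
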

	\begin{proof}
See Appendix.
	\end{proof}

We are now ready to state the main result relating to a $GE$ bound for a robust DNN regressor.
%In the following theorem we derive the $GE$ bound for robust DNN regressors which is the main result of this section.
\begin{thm}(GE Bound)\label{Thm: GE_Bound}
	Consider again that $\mathcal{X}$ and $\mathcal{Y}$ are compact spaces with respect to the $\ell_2$ metric. Consider also the sample space $\mathcal{D}=\mathcal{X}\times\mathcal{Y}$ equipped with a sup metric $\rho$. It follows that a $d$-layer DNN based regressor $\mathbf{\Xi}_{\mathcal{S}} (\cdot):\mathcal{Y}\rightarrow\mathcal{X}$ trained on a training set $\mathcal{S}$ consisting of $m$ i.i.d. training samples obeys with probability $1-\zeta$, for any $\zeta>0$, the $GE$ bound given by:	
	\begin{eqnarray*}
		\nonumber GE(\mathbf{\Xi}_\mathcal{S})
		 &\le &\left(1+\prod_{i=1}^{d}\|\mathbf{J}_i\|_{2,2}\right)\psi+C\left(\mathcal{D},m\right)\\
		 &\le &\left(1+\prod_{i=1}^{d}\|\mathbf{W}_i\|_{2,2}\right)\psi+C\left(\mathcal{D},m\right)\\
		 &\le &\left(1+\prod_{i=1}^{d}\|\mathbf{W}_i\|_{F}\right)\psi+C\left(\mathcal{D},m\right)\label{eq: GE}
	\end{eqnarray*}
	where $C\left(\mathcal{D},m\right)=M\sqrt{\frac{2\mathcal{N}\left(\frac{\psi}{2};\mathcal{D},\rho\right)\log(2)+2\log\left(\frac{1}{\zeta}\right)}{m}}$
	for any $\psi>0$ and $M<\infty$. 
	\end{thm}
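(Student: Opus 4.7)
The plan is to combine Theorem \ref{Thm: Robust_DNN} with the generic generalization bound for algorithmically robust learners due to Xu and Mannor \cite{xu2012robustness}. Their theorem states that whenever a learning algorithm is $(K,\epsilon(\mathcal{S}))$-robust and the loss $l$ is bounded by some finite constant $M$, the generalization gap satisfies, with probability at least $1-\zeta$ over the draw of the $m$ i.i.d. training samples,
\begin{equation*}
|l_{\text{exp}}(\mathbf{\Xi}_\mathcal{S}) - l_{\text{emp}}(\mathbf{\Xi}_\mathcal{S})| \le \epsilon(\mathcal{S}) + M\sqrt{\frac{2K\log(2) + 2\log(1/\zeta)}{m}}.
\end{equation*}
Compactness of $\mathcal{X}$ and $\mathcal{Y}$ together with the continuity of $l$ and of $\mathbf{\Xi}_\mathcal{S}$ guarantee that such an $M<\infty$ exists, so the hypotheses of Xu--Mannor are met. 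Substituting the robustness parameters obtained in Theorem \ref{Thm: Robust_DNN}, namely $K = \mathcal{N}(\psi/2;\mathcal{D},\rho)$ and $\epsilon(\mathcal{S}) = \bigl(1+\prod_{i=1}^{d}\|\mathbf{J}_i\|_{2,2}\bigr)\psi$, yields the first line of the claimed bound.

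The next step is to translate the Jacobian bound into one expressed in terms of the network weights. The layer-wise Jacobian factorizes as $\mathbf{J}_i = \mathbf{D}_i \mathbf{W}_i$, where $\mathbf{D}_i$ is the diagonal matrix whose entries are the pointwise derivatives of the activation $\phi(\cdot)$ evaluated at the pre-activations of layer $i$. For standard activations such as ReLU or sigmoid, those derivatives lie in $[0,1]$, so $\|\mathbf{D}_i\|_{2,2}\le 1$ and therefore $\|\mathbf{J}_i\|_{2,2} \le \|\mathbf{D}_i\|_{2,2}\,\|\mathbf{W}_i\|_{2,2} \le \|\mathbf{W}_i\|_{2,2}$ by sub-multiplicativity of the spectral norm. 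Taking the product over $i=1,\dots,d$ and inserting into the first line delivers the second line. The third line then follows from the elementary inequality $\|\mathbf{W}_i\|_{2,2} \le \|\mathbf{W}_i\|_F$ applied factor by factor.

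The main obstacle, already anticipated in the preceding step, is the passage from Jacobian norms to weight-matrix norms: this relies on the assumption that the derivative of the activation is bounded by one, which is the reason the statement is written as a chain of successively looser inequalities rather than as a single identity. Beyond that, everything is essentially bookkeeping: verifying that the Xu--Mannor theorem applies in the regression setting (it does, as its proof only uses boundedness of $l$ and the partition guarantee of Definition \ref{def:robust}, with no reliance on a classification-specific structure), and keeping the covering-number parameter $\psi$ consistent between the robustness statement and the concentration term $C(\mathcal{D},m)$. With those ingredients in place, the three displayed bounds of Theorem \ref{Thm: GE_Bound} follow immediately.
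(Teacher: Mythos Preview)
Your proposal is correct and follows essentially the same route as the paper: invoke the Xu--Mannor robustness bound, plug in the parameters from Theorem~\ref{Thm: Robust_DNN}, and then relax $\|\mathbf{J}_i\|_{2,2}$ to $\|\mathbf{W}_i\|_{2,2}$ and in turn to $\|\mathbf{W}_i\|_F$. The only difference is that where the paper simply cites prior work for the inequality $\|\mathbf{J}_i\|_{2,2}\le\|\mathbf{W}_i\|_{2,2}$ and for the boundedness of the loss, you supply the explicit arguments (the factorization $\mathbf{J}_i=\mathbf{D}_i\mathbf{W}_i$ with $\|\mathbf{D}_i\|_{2,2}\le 1$, and compactness plus continuity for the bound on $l$), which is a welcome clarification rather than a different approach.
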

\begin{proof}
    See Appendix.
\end{proof}
		
%Our results from Theorems \ref{Thm: Robust_DNN} and \ref{Thm: GE_Bound} provide interesting insights that reinforce some of the to previous theoretical findings in the literature \cite{neyshabur2017implicit,cisse2017parseval}.
Theorems \ref{Thm: Robust_DNN} and \ref{Thm: GE_Bound} also support recent findings in the literature \cite{neyshabur2017implicit,cisse2017parseval}.
For starters, these results suggest that the $GE$ of a deep neural network does not depend directly on the number of network parameters provided that one regularizes appropriately the norm of the weight matrices. Neyshabur \cite{neyshabur2017implicit} has similarly argued that norm based regularization of the weight matrices can improve the generalization ability of a deep neural network in classification tasks. See also \cite{xu2012robustness} that claims that the robustness of a neural entwork does not depend on its size.

% For starters, these results suggest that the $GE$ of a deep neural network is not associated with the number of network parameters and although it apparently seems like these bounds may scale with the network size but this issue can be circumvented with appropriate norm regularization. Xu and Mannor \cite{xu2012robustness} have also shown that robustness of a neural network does not depend on its size. Similarly, in \cite{neyshabur2017implicit}, it is argued that norm based regularization can improve the generalization ability of a deep neural network. 
	
These theorems also suggest that a deeper network may generalize better than a shallower one, by guaranteeing that certain norms of the layer-wise weight matrices are less than one. This result is also aligned with similar claims applicable to classification problems. For example, Neyshabur \cite{neyshabur2017implicit}  put forth similar results deriving from matrix factorization approaches. Several empirical works -- overviewed earlier -- have also shown that constraining the spectral norm of the weight matrices result in better generalization properties.

Theorem \ref{Thm: GE_Bound} also suggests that -- beyond the dependence on the number of training samples -- the generalization ability of a neural network also depends directly on the complexity of the data space $\mathcal{D}$ captured via its covering number. In particular, the $GE$ of more complex data spaces will tend to be higher than the $GE$ of a simpler data space.

Finally, we specialize Theorem \ref{Thm: GE_Bound} from a general regression setting to the inverse problem setting appearing in eq. \eqref{eq: Inverse_model}.
	
	%We, will now specialize the results appearing in Theorems \ref{Thm: Robust_DNN} and \ref{Thm: GE_Bound} to the particular setting of Lipschitz continuous inverse problems.
	\begin{thm}(GE Bound for Inverse Problems)\label{Thm: GE_Bound_Inverse}
	Consider again the spaces $\mathcal{X}$ and $\mathcal{Y}$ equipped with a $\ell_2$ metric, the space  $\mathcal{D}=\mathcal{X}\times\mathcal{Y}$ equipped with the sup-metric $\rho$, and the Lipschitz continuous mapping in \eqref{mapping}. It follows that a $d$-layer DNN based regressor $\mathbf{\Xi}_{\mathcal{S}} (\cdot):\mathcal{Y}\rightarrow\mathcal{X}$ trained on a training set $\mathcal{S}$ consisting of $m$ i.i.d. training samples obeys with probability $1-\zeta$, for any $\zeta>0$, the $GE$ bound given by:
	\begin{eqnarray*}
		\nonumber GE(\mathbf{\Xi}_\mathcal{S})
		 &\le &\left(1+\prod_{i=1}^{d}\|\mathbf{J}_i\|_{2,2}\right)(L\delta+2\eta)+C\left(\mathcal{X},m\right)\\
		 &\le &\left(1+\prod_{i=1}^{d}\|\mathbf{W}_i\|_{2,2}\right)(L\delta+2\eta)+C\left(\mathcal{X},m\right)\\
		 &\le &\left(1+\prod_{i=1}^{d}\|\mathbf{W}_i\|_{F}\right)(L\delta+2\eta)+C\left(\mathcal{X},m\right)\label{eq: RobustApp_Thmr_GE}
	\end{eqnarray*}
	where $C\left(\mathcal{X},m\right)=M\sqrt{\frac{2\mathcal{N}\left(\frac{\delta}{2};\mathcal{X},\ell_2\right)\log(2)+2\log\left(\frac{1}{\zeta}\right)}{m}}$
	for any $\psi>0$, and $M(\mathcal{S})<\infty$. 
    \nonumber
	\end{thm}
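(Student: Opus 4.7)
The plan is to specialize the general argument behind Theorems~\ref{Thm: Robust_DNN} and~\ref{Thm: GE_Bound} by exploiting the additional structure of the inverse problem model in \eqref{eq: Inverse_model}: since $\mathbf{y}$ is determined by $\mathbf{x}$ up to a bounded noise term, a covering of $\mathcal{X}$ alone already induces a useful partition of $\mathcal{D}$. The overall strategy is therefore to (i) build a partition of $\mathcal{D}$ from a covering of $\mathcal{X}$ rather than from a covering of $\mathcal{D}$ under the sup metric $\rho$, (ii) re-establish robustness with a tighter slack $\epsilon(\mathcal{S})$ that reflects both the Lipschitz constant $L$ of $\mathcal{F}$ and the noise radius $\eta$, and (iii) feed this into the Xu--Mannor machinery already invoked in the proof of Theorem~\ref{Thm: GE_Bound}.

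Concretely, I would fix $\delta>0$, take a minimal $\delta/2$-cover of the compact set $\mathcal{X}$ of cardinality $\mathcal{N}(\delta/2;\mathcal{X},\ell_2)$, and define the partition $\{\mathcal{K}_k\}$ of $\mathcal{D}$ by grouping samples whose $\mathbf{x}$-component lies in the same covering ball. For any two samples $(\mathbf{x}_1,\mathbf{y}_1),(\mathbf{x}_2,\mathbf{y}_2)\in\mathcal{K}_k$ one then has $\|\mathbf{x}_1-\mathbf{x}_2\|_2\le\delta$, and by \eqref{mapping} together with $\|\mathbf{n}_i\|_2\le\eta$,
\begin{equation*}
\|\mathbf{y}_1-\mathbf{y}_2\|_2
\le \|\mathcal{F}(\mathbf{x}_1)-\mathcal{F}(\mathbf{x}_2)\|_2+\|\mathbf{n}_1-\mathbf{n}_2\|_2
\le L\delta+2\eta .
\end{equation*}

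With this in hand, the slack $\epsilon(\mathcal{S})$ is obtained exactly as in Theorem~\ref{Thm: Robust_DNN}: using the reverse triangle inequality for the $\ell_2$-loss and Lemma~\ref{Thm : DNN_Lipschitz Con},
\begin{equation*}
|l(\mathbf{\Xi}_\mathcal{S}(\mathbf{y}_1),\mathbf{x}_1)-l(\mathbf{\Xi}_\mathcal{S}(\mathbf{y}_2),\mathbf{x}_2)|
\le \|\mathbf{x}_1-\mathbf{x}_2\|_2+\prod_{i=1}^d\|\mathbf{J}_i\|_{2,2}\|\mathbf{y}_1-\mathbf{y}_2\|_2
\le \Bigl(1+\prod_{i=1}^d\|\mathbf{J}_i\|_{2,2}\Bigr)(L\delta+2\eta),
\end{equation*}
so the regressor is $\bigl(\mathcal{N}(\delta/2;\mathcal{X},\ell_2),(1+\prod_i\|\mathbf{J}_i\|_{2,2})(L\delta+2\eta)\bigr)$-robust. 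Plugging these new robustness parameters into the same Xu--Mannor concentration step used in Theorem~\ref{Thm: GE_Bound} yields the first inequality; the second and third follow from the standard layer-wise bounds $\|\mathbf{J}_i\|_{2,2}\le\|\mathbf{W}_i\|_{2,2}\le\|\mathbf{W}_i\|_F$ (valid whenever the activation $\phi$ is $1$-Lipschitz), which are already invoked in the proof of Theorem~\ref{Thm: GE_Bound}.

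The main obstacle I anticipate is the collapse of the hybrid bound $\|\mathbf{x}_1-\mathbf{x}_2\|_2+\prod_i\|\mathbf{J}_i\|_{2,2}\|\mathbf{y}_1-\mathbf{y}_2\|_2$ into the clean form $(1+\prod_i\|\mathbf{J}_i\|_{2,2})(L\delta+2\eta)$: this is clean only when $\delta\le L\delta+2\eta$, which must be justified either by an implicit assumption $L\ge 1$ (natural for ill-conditioned forward operators $\mathcal{F}$) or by simply replacing $\delta$ with $L\delta+2\eta$ as a mild looseness. A secondary subtlety is that the robustness framework originally partitions $\mathcal{D}$ in its own metric $\rho$, whereas here the partition is driven only by the $\mathbf{x}$-component; one must verify that Xu and Mannor's concentration inequality only requires a measurable partition of $\mathcal{D}$ with the stated slack, which is indeed the case, so the argument carries over verbatim and the covering number is legitimately replaced by $\mathcal{N}(\delta/2;\mathcal{X},\ell_2)$ inside $C(\mathcal{X},m)$.
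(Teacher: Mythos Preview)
Your proposal is correct and follows essentially the same route as the paper. The paper packages step~(i) as a separate covering-number lemma showing $\mathcal{N}\bigl((L\delta+2\eta)/2;\mathcal{D},\rho\bigr)\le\mathcal{N}(\delta/2;\mathcal{X},\ell_2)$ and then invokes Theorem~\ref{Thm: GE_Bound} with $\psi=L\delta+2\eta$, whereas you partition $\mathcal{D}$ by the $\mathbf{x}$-coordinate and redo the robustness computation directly; the logical content is identical, and the implicit step $\max(\delta,L\delta+2\eta)=L\delta+2\eta$ that you flag as an obstacle is made verbatim (and equally without comment) in the paper's argument.
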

	\begin{proof}
		See Appendix.
	\end{proof}

This suggests that -- beyond the number of training samples, the network parameters, and the network depth -- there are three other quantities that affect the generalization error of a deep network in inverse problems: the level of noise, the complexity of the data (via its covering number), the complexity of the mapping (via its Lipschitz constant). In particular, an increase in each of these quantities results as expected in an increase in the generalization error of the network. In general, the deep neural network training process is agnostic to the value of these quantities: it is an interesting question to develop training processes that are cognizant of these quantities with the view to obtain better generalization properties.

%The results embodied in Theorem \ref{Thm: GE_Bound_Inverse} further illuminates an interesting point. When the linear transform is constrained to be Lipschitz continuous, as is the case in classical reconstruction theory \cite{candes2008restricted,donoho2006stable}, then the $GE$-bound can be further controlled by the covering number of the input space $\mathcal{X}$ and the Lipschitz constant $L$. Thus, although deep learning architectures do not usually take domain information such as sparsity into account while solving a problem, this bound shows that for a less complex input space $\mathcal{X}$, we end up with a smaller $GE$ when all the other factors are kept constant.

\section{Computationally Efficient Proxy Regularization Strategies}
\label{sec: Reg_Strat}

Theorems \ref{Thm: GE_Bound} and \ref{Thm: GE_Bound_Inverse} motivate regularization strategies for inverse problems that constrain (1) the spectral norm of the Jacobian matrix, (2) the spectral norm of the weight matrices, or (3) the Frobenious norm of the weight matrices, associated with each individual layer. \footnote{Note that regularization involving the Frobenious norm of the weight matrices is analogous to regularization with weight decay, so we will not discuss this further.}

It is well known that the spectral norm of a matrix can be contained by limiting its maximum singular value. However, it is also known that the cost associated with computing the singular values of a large matrix is a formidable task even for modern software packages.

We will therefore discuss computationally efficient proxy-regularization strategies that also aim to limit the spectral norm of the Jacobian or weight matrices of each individual layer in a deep network. Note that some of these regularization strategies such as Spectral Restricted Isometry Property, Parseval Networks and Spectral Norm regularization have already been proposed in classification or adversarial settings \cite{bansal2018can,cisse2017parseval,yoshida2017spectral} whereas other strategies we discuss appear to be new. 

\subsection{Spectral Norm Regularization}
This technique was proposed by \cite{yoshida2017spectral} to increase the robustness of DNNs to perturbations. The authors proposed to punish the spectral norm of the weight matrices and kernels associated with the fully connected and convolutional layers respectively. The paper uses the power iteration method to efficiently evaluate the spectral norm of the transforms and adds it to the empirical loss of the network, resulting in the following objective function:
\begin{equation}
\label{eq: spec_reg}
    l_{emp}+\beta\sum_i\sigma(\mathbf{W}_i)
\end{equation}
where $\beta$ is the regularization coefficient and $\sigma$ represents the maximum singular value of $\mathbf{W}_i$. \cite{miyato2018spectral} propose a similar techniques, involving the normalization of the singular value of the weight matrix at each layer, ensuring a Lipschitz constant closer to one.%where they propose to normalize the singular value of the weight matrix at each layer and ensuring a Lipschitz constant closer to one. 

In \cite{saxe2013exact}, it is shown theoretically that a stronger condition of ``dynamic symmetry'' which requires all singular values of the layer Jacobian to concentrated around 1 ensures faster convergence for deep linear networks. These results are backed by \cite{wang2016general}, where it was shown through the study of extended data jacobian matrices (EDJM) that the networks for which all the singular values of the Jacobian were closer to its maximum value generalized better. However, only initializing the weight matrices orthogonally does not ensure that the dynamic isometry condition remains fulfilled throughout the training procedure unless the networks are regularized appropriately. Next, we discuss regularization techniques that attempt to achieve dynamic isometry by enforcing orthogonality in the weight matrices.
\subsection{Spectral Restricted Isometry Property (SRIP)} This strategy -- originally proposed in \cite{bansal2018can} -- aims to achieve orthogonality in the filters $\tilde{\mathbf{W}}_i$, of convolutional layers by punishing their Restricted Isometry Constant. It involves optimization of the following objective function: 
\begin{equation}
    l_{emp}+\beta\sum_i\sigma(\tilde{\mathbf{W}}_i^T\tilde{\mathbf{W}}_i-\mathbf{I})
\end{equation}
This regularizer enforces all the singular values of the kernel to be closer to 1 and therefore is a hard constraint in comparison to eq. (\ref{eq: spec_reg}).
\subsection{Weight Orthogonalization (WO)}
We propose to regularize our DNNs using the following constraint:
\begin{equation}
\label{eq: weight_otho_reg}
    l_{emp}+\beta\sum_{i=1}^{d}|\mathbf{W}^T\mathbf{W}-\mathbf{I}|
\end{equation}
where $|.|$ denotes the entry wise $\ell_1$ norm. and is equal to the sum of absolute values in a matrix.

Variations of this regularizers have previously been proposed by \cite{brock2016neural,huang2018orthogonal} for classification tasks. \cite{cisse2017parseval} propose to minimize the Frobenious norm of the difference between the Gram matrix of $\mathbf{W}$ and the Identity matrix. However, instead of adding the regularization term to the empirical loss, they explicitly modify the original gradient of each layer transform. For the experimental results, the authors propose to regularize only a subset $T$ of rows of the weight matrix. Our experimental results showed that this `partially' parseval network fared worse in comparison to networks constrained to have fully orthogonal weight matrices. 

\cite{cisse2017parseval} and more recently, \cite{tsuzuku2018lipschitz} show that, for a convolutional layer with $d_{in}$ input channels, $d_{out}$ channels and a $h\times w$ filter, the Lipschitz constant of the convolutional layer can be bounded in terms of the kernel $\tilde{\mathbf{W}}$ and an arbitrary constant $\alpha$ as $\sqrt{\alpha}\|\tilde{\mathbf{W}}\|_2$. Therefore, for a conv layer, we modify the cost function (\ref{eq: weight_otho_reg}) as follows:
\begin{equation}
\label{eq: weight_otho_reg_conv}
    l_{emp}+\beta\sum_{i=1}^{d}|\tilde{\mathbf{W}}^T\tilde{\mathbf{W}}-\frac{1}{\alpha}\mathbf{I}|
\end{equation}
Following the practice of \cite{yoshida2017spectral,bansal2018can}, we reshape the kernel to $(d_{in}\times h\times h)\times d_{out}$. 

\subsection{Jacobian Orthogonalization (JO)}
For, fully connected networks, we also run experiments for optimizing an objective function that enforces Jacobians at each layer to be orthogonal.  
\begin{equation*}
    l_{emp}+\beta\sum_{i=1}^{d}|\mathbf{J}^T\mathbf{J}-\mathbf{I}|
\end{equation*}
This is a computationally expensive procedure since it requires additional computation of the jacobian for every layer at each training step and empirically it fared worse than weight orthogonal regularizer.

\section{Experimental Results}\label{sec: experiments}
We now conduct a series of experiments that aim to gauge the effectiveness of the various regularizers on a range of inverse problems. In particular, we have considered both fully connected neural networks as well as convolutional neural networks.
%In this section, we conduct experiments to explore different regularizers that achieve our goal of reducing the Lipschitz constant of the network.
\subsection{Fully Connected Network}
{\bf Problem}:
We consider a toy inverse problem involving the reconstruction of an object $\mathbf{x} \in \mathbb{R}^{N_x}$ from linear observations $\mathbf{y} = \mathbf{A} \mathbf{x} \in \mathbb{R}^{N_y}$ where the matrix $\mathbf{A} \in \mathbb{R}^{N_y \times N_x}$ is obtained by sampling $N_x$ column vectors uniformly on the unit sphere in $\mathbb{R}^{N_y}$ with $\nicefrac{N_y}{N_x} = \nicefrac{1}{2}$. This ensures, with high probability, that $\mathbf{A}$ satisfies restricted isometry property and thus is Lipschitz continuous. %In particular, we also consider a compression ratio $\nicefrac{N_y}{N_x}$ of $0.5$.
%For fully connected network, we construct a toy linear inverse problem where the input space $\mathcal{X}$ is considered to be an image dataset. We model the linear function $\mathcal{F}$ by a Lipschitz continuous `fat' transformation matrix $\mathbf{A}\in{\mathbb{R}^{N_y\times N_x}}$ and fix the value of $\eta$ to zero i.e, $\mathcal{Y}=\{\mathbf{Ax},\forall \mathbf{x}\in\mathcal{X}\}$. We obtain $\mathbf{A}$ by sampling $N_x$ column vectors uniformly on the unit sphere in $\mathbb{R}^{N_y}$. This ensures, with high probability, that $\mathbf{A}$ satisfies restricted isometry property and thus is Lipschitz continuous \cite{candes2008restricted}. Lastly, we assume a compression ratio $\nicefrac{N_y}{N_x}$ of $0.5$.

{\bf Dataset Preparation}: We also consider the reconstruction of images given their compressed linear observations, associated with the CIFAR-10 dataset. This dataset is composed of $50K$ training and $10K$ test images where each image is a $32\times32$ natural colour scene with $3$ colour channels. We therefore convert each $32\times32$ image onto a $3072\times1$, we pass such a vectorized image through the compressive linear operator resulting in a $1536\times1$ vector, and we finally reconstruct the original $3072\times1$ vector from the compressed $1536\times1$ one using an appropriately trained neural network.

%We use the images in CIFAR-10 dataset as the input $\mathcal{X}$ and construct the target set $\mathcal{Y}$ using the random matrix $\mathbf{A}$. CIFAR-10 is composed of $50K$ training and $10K$ test images and each image is a $32\times32$ natural colour scene with $3$ colour channels. Without resorting to any preprocessing, we unroll each image, and multiply the resulting $3072\times1$ column vector with the matrix $\mathbf{A}$ to obtain our $1536\times1$ compressed target vector.
\begin{table}[t]
\caption{%$GE$ for various regularizer when a 5-layer feed-forward NN is used to solve a linear inverse problem.
Generalization error and test error for a 5-layer feed-forward NN, employed to solve a toy reconstruction problem, trained using different regularization strategies.
}
\label{Table: FC-table}
\vskip 0.15in
\begin{center}
\begin{small}
\begin{sc}
\begin{tabular}{lcr}
\toprule
Regularizer & Test Loss & $GE$ \\
\midrule
None (Vanilla SGD)   & 3.175 & 0.4284\\
Weight Decay & 2.892 & 0.0298\\
Parseval Network & 2.872    &0.0924\\
SRIP & 2.907    &0.1772\\
Spectral Reg & 2.896     &0.0882\\
WO & 1.986      &0.0100\\
JO  & 2.5371    &0.0394 \\
\bottomrule
\end{tabular}
\end{sc}
\end{small}
\end{center}
\vskip -0.1in
\end{table}

\begin{table}[t]
\caption{%$GE$ for a SRCNN \cite{dong2016image} when its Lipschitz constant is constrained using different techniques.
Generalization error, test error, and PSNR for a SRCNN \cite{dong2016image} , employed to solve an image super-resolution problem, trained using different regularization strategies.}
\label{Table: SRCNN}
\vskip 0.15in
\begin{center}
\begin{small}
\begin{sc}
\begin{tabular}{lccr}
\toprule
Regularizer & Test Loss & $GE$ & PSNR\\
\midrule
None (Vanilla SGD)   & 0.00704 & 0.00110&21.93\\
Weight Decay & 0.00703 & 0.00100& 21.94\\
Parseval Network & 0.00332    &0.00004& 25.49\\
SRIP & 0.0033    &0.00014& 25.58\\
WO & 0.00345      &0.00007&25.33\\
\bottomrule
\end{tabular}
\end{sc}
\end{small}
\end{center}
\vskip -0.1in
\end{table}

{\bf Model and Training}:
We use a fully-connected feed forward DNN with 5 layers with the number of neurons per layer corresponding to the number of dimensions of the target vector (i.e. 3072).%each layer containing neurons equal to the target vector which is 3072 in our case.
We trained this DNN using SGD with an initial learning rate of 0.1 which is then reduced by a factor of half after every 10 epochs. We used mini batches of size 100 and the network was trained for a number of 125 epochs in total.
\subsection{Convolutional Neural Network}
{\bf Problem}:
Here, we consider a classical image super-resolution (SR) problem involving the reconstruction of a high-resolution (HR) image given its low-resolution (LR) version. We closely follow the model and simulation setup adopted by \cite{dong2016image} with few changes discussed later.
\begin{figure}[t]
\vskip 0.2in
\begin{center}
\centerline{\includegraphics[width=\columnwidth]{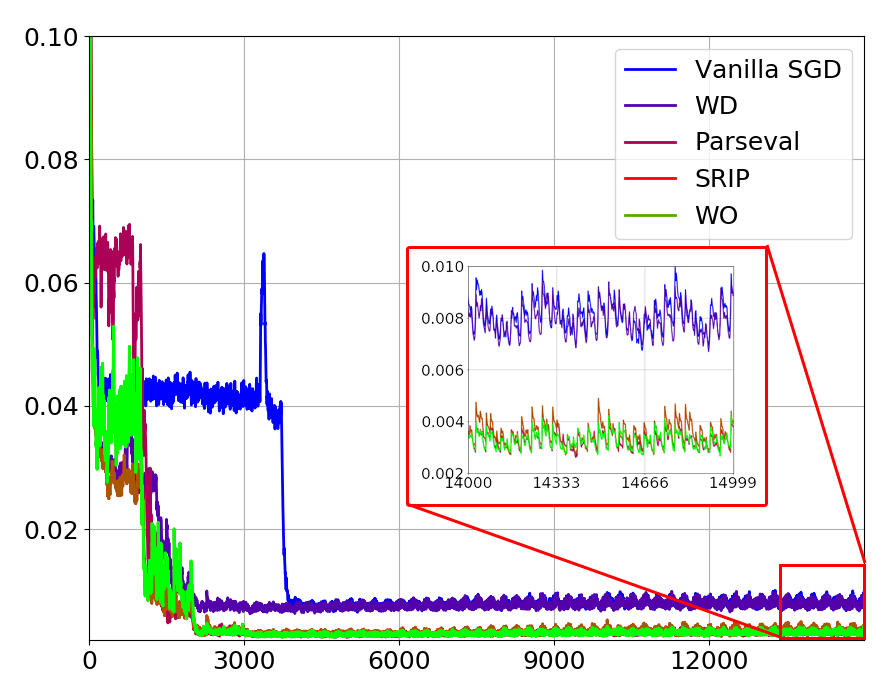}}
\caption{Comparison of learning results for different regularization strategies on SRCNN.}
\label{Fig: SRCNN_learning_curve}
\end{center}
\vskip -0.2in
\end{figure}

{\bf Dataset Preparation}: We train our convolutional model on a dataset consisting of $91$ high resolution coloured images of varying sizes and test the performance of our network on the Set14 dataset \cite{dong2016image}. We crop each image in both sets into different sub images of size $33\times33$ with a stride of $14$ and $28$ for the training and test set, respectively.  Each sub-image image is then downsampled by a factor of $3$ to obtain a LR version of the image. Note that -- as standard in practice -- each LR image is upscaled to the original image size using bicubic interpolation, both during training and testing time, for further processing by the convolutional neural network.

{\bf Model and Training}: We use the 3-layer SRCNN model from \cite{dong2016image} with 64 filters of size $9\times9$ in the first layer, 32 filters of dimension $1\times1$ in the second layer and 3 filters of size $5\times5$ in the third layer respectively.

Here again, we train our models with a minibatch of size 100. We use SGD and with an initial learning rate of 0.01. We train our models for $15000$ epochs and reduce the learning rate by half after every 1000 epochs. 

\subsection{Results}
Table \ref{Table: FC-table} reports results associated with the toy reconstruction problem for different regularization strategies. In particular, we consider a fully connected neural network trained using standard SGD, weight decay, parseval regularization \cite{cisse2017parseval}, spectral norm regularization \cite{yoshida2017spectral}, SRIP \cite{bansal2018can}, WO and JO\footnote{Note that although the results in \cite{bansal2018can} were given for convolutionl networks only, we evaluated the performance of SRIP regularizer for fully conncted layers as well since the analysis holds for fully connected layers as well.}. These results suggest that our proposed proxy regularization strategies WO and JO results in the best generalization error and WO also results in the best test error, compared to all the other competing approaches. The results also suggest that WO can outperform JO -- both in terms of generalization and test error -- despite the fact one might expect that bounding the norm of the Jacobian matrices can lead to better results than bounding the norm of the weight matrices. This result can be explained in part in view of the fact that we are using a proxy regularization strategy.

Table \ref{Table: SRCNN} reports results for the image super-resolution problem. Here, we consider a convolutional neural network regularized with and without weight decay, parseval regularization, SRIP and WO. We did not get good results by regularizing the SRCNN with spectral norm regularization therefore we do not report them here.  Interestingly, our results show that all the regularizers that induced orthogonality in layer transforms performed superior than networks regularized with weight decay; such regularizers also resulted in a very competitive performance with WO outbeating others in terms of $GE$ but only slightly underperforming the others in test error and Peak Signal to Noise Ratio (PSNR).

\section{Conclusions}\label{sec: conclusion}
We have studied the use of deep neural networks in inverse problems. In particular, building on the robustness framework, we have studied the generalization ability of deep learning architectures, offering a generalization bound that encapsulates key quantities associated with the problem such as Lipschitz constant of each layer, covering number of the sample space, noise and smoothness of the mapping between the input and the output space. In particular, 
%We formulate a $GE$ bound for deep learning architectures on inverse problems.
our bound suggests training strategies that result in a network whose per-layer Jacobian matrix exhibits a low spectral norm. We have also explored different proxy regularizer that may result in per-layer Jacobian matrices with low spectral norm, such as Weight Orthogonalization, Jacobian Orthogonalization and SRIP. Our simulation results suggest that both existing and newly proposed proxy regularization strategies can lead to better performance in inverse problems involving image reconstruction or image super-resolution. 
\section*{Appendix}

\begin{proof}[Proof of Theorem \ref{Thm: Robust_DNN}]
	Let $\mathbf{s}_1 = (\mathbf{x}_1,\mathbf{y}_1),\mathbf{s}_2 = (\mathbf{x}_2,\mathbf{y}_2)\in\mathcal{D}$. Then
	\begin{IEEEeqnarray}{rCl}\label{eq : Lip_cont_reg_loss}
	&&|l(\mathbf{\Xi}_\mathcal{S},\mathbf{s}_1)-l(\mathbf{\Xi}_\mathcal{S},\mathbf{s}_2)|\nonumber\\
	&=&\big|\|\mathbf{x}_1-\mathbf{\Xi}_\mathcal{S}(\mathbf{y}_1)\|_2-\|\mathbf{x}_2-\mathbf{\Xi}_\mathcal{S}(\mathbf{y}_2)\|_2\big|\nonumber\\
	&\stackrel{(a)}{\leq}& \|\mathbf{x}_1-\mathbf{\Xi}_\mathcal{S}(\mathbf{y}_1)-\mathbf{x}_2+\mathbf{\Xi}_\mathcal{S}(\mathbf{y}_2)\|_2\nonumber\\
	&\stackrel{(b)}{\leq}& \|\mathbf{x}_1-\mathbf{x}_2\|_2+\|\mathbf{\Xi}_\mathcal{S}(\mathbf{y}_1)-\mathbf{\Xi}_\mathcal{S}(\mathbf{y}_2)\|_2\nonumber\\
	&\stackrel{(c)}{\leq}& \|\mathbf{x}_1-\mathbf{x}_2\|_2+\prod_{i=1}^{d}\|\mathbf{J}_i\|_{2,2}\|\mathbf{y}_1-\mathbf{y}_2\|_2\nonumber\\
	&\stackrel{(d)}{\leq}& \left(1+\prod_{i=1}^{d}\|\mathbf{J}_i\|_{2,2}\right)\rho(\mathbf{s}_1,\mathbf{s}_2)
	\end{IEEEeqnarray}
	The inequalities $(a),(b)$ and $(c)$ hold due to reverse triangle inequality, Minkowski-inequality and Theorem \ref{Thm : DNN_Lipschitz Con}, respectively. Finally, since the sup product metric, $\rho$ upper bounds the distance metric for $\mathcal{X}$ and $\mathcal{Y}$, the inequality $(d)$ is implied.
	
	For a $\nicefrac{\psi}{2}$-cover of $\mathcal{D}$ and $\forall \mathbf{s}_1\in\mathcal{S}\wedge \mathbf{s}_1,\mathbf{s}_2\in\mathcal{D}$, $\rho(\mathbf{s}_1,\mathbf{s}_2)\leq\psi$. Thus
	\begin{IEEEeqnarray*}{c}
	|l(\mathbf{\Xi}_\mathcal{S},\mathbf{s}_1))-l(\mathbf{\Xi}_\mathcal{S},\mathbf{s}_2)|\leq \left(1+\prod_{i=1}^{d}\|\mathbf{J}_i\|_{2,2}\right){\psi}
	\end{IEEEeqnarray*}
	and the theorem follows.
\end{proof}

\begin{proof}[Proof of Theorem \ref{Thm: GE_Bound}]
	Let $\mathcal{D}$ be partitioned into $K$ disjoint sets. The $GE$ of a robust learning algorithm $\mathbf{\Xi}_\mathcal{S})$ is given by \cite{xu2012robustness}:
	\begin{IEEEeqnarray}{rCl}
	\label{eq: GE_Xu}
	GE&\le&|l_{\text{exp}}(\mathbf{\Xi}_\mathcal{S})-l_{\text{emp}}(\mathbf{\Xi}_\mathcal{S})|\nonumber\\
	& \le& \epsilon(\mathcal{S})+|\max_{\mathbf{s}\in\mathcal{D}}l(\mathbf{\Xi}_\mathcal{S},.)|\sqrt{\frac{2K\log(2)+2\log(1/\zeta)}{m}}
		\nonumber\\*
	\end{IEEEeqnarray}
	where $\max_{\mathbf{s}\in\mathcal{D}}l(\mathbf{\Xi}_\mathcal{S},.)$ represents the maximum value of loss over all the samples in the sample space.
	
Let us investigate the various quantities that appear in eq. \ref{eq: GE_Xu}. We know from Theorem \ref{Thm: Robust_DNN} that for a $\nicefrac{\psi}{2}$-cover a $d$-layer DNN, $\mathbf{\Xi}_\mathcal{S}$ is $\left(\left(1+\prod_{i=1}^{d}\|\mathbf{J}_i\|_{2,2}\psi\right),\mathcal{N}(\nicefrac{\psi}{2};\mathcal{D},\rho)\right)$ robust. It can then be shown that the spectral norm of Jacobian matrix of each layer in a DNN can be upper bounded by the spectral norm of the weight matrices \cite{sokolic2017robust,tsuzuku2018lipschitz}. Thus:
	\begin{IEEEeqnarray*}{rCl}
	\label{eq: jacobiannorm_weightnorm}
	    \epsilon(\mathcal{S})\le1+\prod_{i=1}^{d}\|\mathbf{J}_i\|_{2,2}
	    &\le&1+\prod_{i=1}^{d}\|\mathbf{W}_i\|_{2,2}\\
	    &\le&1+\prod_{i=1}^{d}\|\mathbf{W}_i\|_{F}
	\end{IEEEeqnarray*}
	
	Next, we know that for Lipschitz continuous DNNs $\mathbf{\Xi}_{\mathcal{S}}$, the loss function $l(\mathbf{\Xi}_{\mathcal{S}},.)$ is bounded \cite{frechet1904generalisation} and thus for some $M<\infty$, $\max_{\mathbf{s}\in\mathcal{D}}l(\mathbf{\Xi}_\mathcal{S},.)<M$.
	
	Substituting in eq. (\ref{eq: GE_Xu}) gives:
	\begin{IEEEeqnarray*}{rCl}\label{eq:GE_th}
		\nonumber GE(\mathbf{\Xi}_\mathcal{S})&\leq&|l_{\text{exp}}(\mathbf{\Xi}_\mathcal{S})-l_{\text{emp}}(\mathbf{\Xi}_\mathcal{S})|\\
		&{\leq}& \left(1+\prod_{i=1}^{d}\|\mathbf{J}_i\|_{2,2}\right)\psi+C(\mathcal{D},m)\\
		&{\leq}& \left(1+\prod_{i=1}^{d}\|\mathbf{W}_i\|_{2,2}\right)\psi+C(\mathcal{D},m)\\
		&{\leq}& \left(1+\prod_{i=1}^{d}\|\mathbf{W}_i\|_{F}\right)\psi+C(\mathcal{D},m)\\
	\end{IEEEeqnarray*}
	where $C(\mathcal{D},m)= M\sqrt{\frac{2\mathcal{N}(\nicefrac{\psi}{2};\mathcal{D},\rho)\log(2)+2\log(1/\zeta)}{m}}$.
\end{proof}
	\begin{proof}[Proof of Theorem \ref{Thm: GE_Bound_Inverse}]
	We first prove the following Lemma showing that for a Lipschitz continuous mapping $\mathcal{F} (\cdot)$ and sup metric $\rho$, the $\nicefrac{\delta}{2}$-cover of $\mathcal{X}$ upper bounds the $\nicefrac{L\delta}{2}$-cover of $\mathcal{D}$
	%To see why this theorem holds, it suffices to prove the following Lemma which show that for a Lipschitz continuous mapping $\mathcal{F}$ and sup metric $\rho$, the $\nicefrac{\delta}{2}$-cover of $\mathcal{X}$ upper bounds the $\nicefrac{L\delta}{2}$-cover of $\mathcal{D}$.
\begin{lem}\label{Lem : Cov_num_Prod_space}
	Define $\mathcal{F}:\mathbb{R}^{N_x}\rightarrow\mathbb{R}^{N_y}$ to be an $L$-Lipschitz continuous map, namely; $\|\mathcal{F}(\mathbf{x}_1)-\mathcal{F}(\mathbf{x}_2)\|_2\leq L\|\mathbf{x}_1-\mathbf{x}_2\|_2$. Then for the product space $\mathcal{D}=\mathcal{X}\times\mathcal{Y}$, equipped with the metric $\rho$,
	\begin{equation*}
	\mathcal{N}(\nicefrac{L\delta+2\eta}{2};\mathcal{D},\rho)\leq\mathcal{N}(\nicefrac{\delta}{2};\mathcal{X},\ell_2)
	\end{equation*}
\end{lem}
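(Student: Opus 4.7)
The plan is to construct an explicit cover of $\mathcal{D}$ from a cover of $\mathcal{X}$. Start from a minimal $\nicefrac{\delta}{2}$-cover $\{\mathbf{c}_1,\ldots,\mathbf{c}_K\}\subset\mathcal{X}$ of $\mathcal{X}$ in the $\ell_2$ metric, so $K = \mathcal{N}(\nicefrac{\delta}{2};\mathcal{X},\ell_2)$. For each center $\mathbf{c}_k$, propose the point $\mathbf{t}_k = (\mathbf{c}_k,\mathcal{F}(\mathbf{c}_k))$ as a candidate center in $\mathcal{D}$. Since every $\mathbf{y}\in\mathcal{Y}$ arises as $\mathcal{F}(\mathbf{x})+\mathbf{n}$ for some $\mathbf{x}\in\mathcal{X}$ with $\|\mathbf{n}\|_2\le\eta$, the image $\mathcal{F}(\mathbf{c}_k)$ is a legitimate noise-free observation in $\mathcal{Y}$, so each $\mathbf{t}_k$ lies in $\mathcal{D}$.

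Next, I would verify that the $K$ points $\{\mathbf{t}_k\}$ cover $\mathcal{D}$ at radius $\nicefrac{(L\delta+2\eta)}{2}$ in the sup product metric $\rho$. Pick any $\mathbf{s}=(\mathbf{x},\mathbf{y})\in\mathcal{D}$ and choose the index $k$ so that $\|\mathbf{x}-\mathbf{c}_k\|_2\le\nicefrac{\delta}{2}$. For the $\mathcal{Y}$-component, write $\mathbf{y}-\mathcal{F}(\mathbf{c}_k)=\bigl(\mathcal{F}(\mathbf{x})-\mathcal{F}(\mathbf{c}_k)\bigr)+\mathbf{n}$ and apply the triangle inequality together with the $L$-Lipschitz property of $\mathcal{F}$ and the noise bound $\|\mathbf{n}\|_2\le\eta$ to get
\begin{equation*}
\|\mathbf{y}-\mathcal{F}(\mathbf{c}_k)\|_2 \le L\|\mathbf{x}-\mathbf{c}_k\|_2+\eta \le \frac{L\delta}{2}+\eta = \frac{L\delta+2\eta}{2}.
\end{equation*}
Assuming $L\ge 1$ (which is the interesting regime; otherwise the bound is even slacker) one also has $\nicefrac{\delta}{2}\le\nicefrac{(L\delta+2\eta)}{2}$, so that the sup metric satisfies $\rho(\mathbf{s},\mathbf{t}_k)=\max\{\|\mathbf{x}-\mathbf{c}_k\|_2,\|\mathbf{y}-\mathcal{F}(\mathbf{c}_k)\|_2\}\le\nicefrac{(L\delta+2\eta)}{2}$. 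Hence $\{\mathbf{t}_k\}_{k=1}^{K}$ is a valid $\nicefrac{(L\delta+2\eta)}{2}$-cover of $\mathcal{D}$, which gives $\mathcal{N}(\nicefrac{(L\delta+2\eta)}{2};\mathcal{D},\rho)\le K=\mathcal{N}(\nicefrac{\delta}{2};\mathcal{X},\ell_2)$, as desired.

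I do not expect any serious obstacle in this argument; it is essentially a direct consequence of the Lipschitz property of $\mathcal{F}$, the bounded noise assumption, and the structure of the sup metric. The only mildly delicate points are (i) ensuring that the proposed cover centers lie inside $\mathcal{D}$ rather than in an ambient superset, which is why I deliberately pair $\mathbf{c}_k$ with its noise-free image $\mathcal{F}(\mathbf{c}_k)$, and (ii) the implicit assumption $L\ge 1$ needed for the $\mathcal{X}$-component to be dominated by the $\mathcal{Y}$-component in the sup metric. If one wanted to avoid that assumption, one could simply replace the covering radius by $\max\{\nicefrac{\delta}{2},\nicefrac{L\delta}{2}+\eta\}$ throughout; in the regime relevant to Theorem~\ref{Thm: GE_Bound_Inverse} these two expressions coincide.
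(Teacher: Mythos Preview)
Your proposal is correct and follows essentially the same route as the paper: lift a minimal $\ell_2$-cover of $\mathcal{X}$ to a cover of $\mathcal{D}$ by pairing each center with its image under $\mathcal{F}$, then control the two coordinates of the sup metric via the Lipschitz bound and the noise constraint. Your write-up is in fact a bit tidier than the paper's --- you keep explicit track of the cardinality of the induced cover (the paper phrases it as a Cartesian product $\mathcal{X}'\times\mathcal{Y}'$, which obscures the count) and you flag the implicit hypothesis $L\ge 1$ that the paper uses silently when it writes $\max(\delta,L\delta+2\eta)=L\delta+2\eta$.
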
	

\begin{proof}	
	Consider a $L$-Lipschitz continuous function $\mathcal{F}$. Let us define the set:% define $\mathcal{Y}$, such that
		\begin{equation*}
	\mathcal{Y}=\{\mathbf{y} = \mathcal{F}(\mathbf{x})+\mathbf{n}: \mathbf{x}\in\mathcal{X},\|\mathbf{n}\|_2\le\eta\}
	\end{equation*}
	Let $\mathcal{X}'$ be a $\delta$-cover of $\mathcal{X}$. Then $\forall \mathbf{x}\in\mathcal{X}$, $\exists\mathbf{x}'\in\mathcal{X}'$ such that $\|\mathbf{x}-\mathbf{x}'\|_2\leq\delta$.
	
	Now, define the set: 
	\begin{equation*}
	\mathcal{Y}'=\{\mathbf{y}' = \mathcal{F}(\mathbf{x}')+\mathbf{n}', \mathbf{x}'\in\mathcal{X}',\|\mathbf{n}'\|_2\le\eta\}
	\end{equation*}
	Then, for any $\mathbf{y} \in \mathcal{Y}$ and $\mathbf{y'} \in \mathcal{Y'}$, the Lipschitz continuity of the mapping implies
	\begin{IEEEeqnarray*}{rCl}
		\|\mathbf{y}-\mathbf{y}'\|_2&\le&\|\mathcal{F}(\mathbf{x})+\mathbf{n}-\mathcal{F}(\mathbf{x}')-\mathbf{n}'\|_2\\
		&\leq&\|\mathcal{F}(\mathbf{x})-\mathcal{F}(\mathbf{x}')\|_2+\|\mathbf{n}-\mathbf{n}'\|_2\\
		&\leq& L\|\mathbf{x}-\mathbf{x}'\|_2+\|\mathbf{n}-\mathbf{n}'\|_2\\
		&\leq& L\delta+2\eta
	\end{IEEEeqnarray*}
	$\mathcal{Y}'$ is, therefore a $(L\delta+2\eta)$-cover of $\mathcal{Y}$.	
	
	Now, define $\mathcal{D}'=\mathcal{X}'\times\mathcal{Y}'$. Then, $\forall \mathbf{s}\in\mathcal{D}, \exists\mathbf{s}'\in\mathcal{D}'$ such that
	\begin{IEEEeqnarray*}{rCl}
		\rho(\mathbf{s},\mathbf{s}')&=&\max(\|\mathbf{x}-\mathbf{x}'\|_2,\|\mathbf{y}-\mathbf{y}'\|_2)\\
		&{\leq}&\max(\delta,L\delta+2\eta)\\
		&=&L\delta+2\eta
	\end{IEEEeqnarray*}		
	Thus $\mathcal{D}'$ is a $(L\delta+2\eta)$-cover of $\mathcal{D}$ 
	This concludes the proof.
\end{proof}	
Theorem \ref{Thm: GE_Bound_Inverse} now follows from Theorem \ref{Thm: GE_Bound} and Lemma \ref{Lem : Cov_num_Prod_space}.
\end{proof}

% ---- Bibliography ----
%
% BibTeX users should specify bibliography style 'splncs04'.
% References will then be sorted and formatted in the correct style.
%
\bibliographystyle{splncs04}
\bibliography{references_}

\end{document}